\documentclass[11pt]{article}

\usepackage{acl}

\usepackage[utf8]{inputenc}
\usepackage[T1]{fontenc}
\usepackage{times}
\usepackage{microtype}

\usepackage{amsmath}
\usepackage{amssymb}
\usepackage{amsthm}

\usepackage{booktabs}
\usepackage{multirow}
\usepackage{array}

\usepackage{tikz}
\usetikzlibrary{arrows.meta, positioning, calc, fit, decorations.pathreplacing}

\usepackage{listings}

\lstdefinelanguage{Haskell}{
  morekeywords={data,where,let,in,case,of,if,then,else,do,class,instance,type,newtype,deriving},
  sensitive=true,
  morecomment=[l]{--},
  morestring=[b]",
}

\lstdefinelanguage{Rust}{
  morekeywords={pub,struct,impl,fn,let,mut,where,self,const,trait,use,mod,crate,match,Some,None,Vec,in},
  sensitive=true,
  morecomment=[l]{//},
  morecomment=[s]{/*}{*/},
  morestring=[b]",
}

\newtheorem{definition}{Definition}

\newtheorem{proposition}{Proposition}

\theoremstyle{remark}

\newtheorem{example}{Example}

\title{Comonadic Morphophonology: A Compositional Framework\\
for Context-Dependent Morphological Rules in Finnish}

\author{
  Yongseok Jang\thanks{Claude (Anthropic) was used for LaTeX polishing and code-writing assistance; the author takes full responsibility for the content.} \\
  Independent Researcher \\
  \texttt{yongsk0066@gmail.com}
}

\date{}

\begin{document}
\maketitle

\begin{abstract}
Composing finite-state transducers (FSTs) for context-dependent
morphophonological rules---consonant gradation, vowel harmony,
possessive suffix assimilation---leads to multiplicative state
explosion; neural models sidestep the problem but provide no formal
account of the rules themselves.
We present the first framework where each morphophonological rule is a function from a focused local context to a single output segment---the type of a local rule familiar from cellular automata---and where length-changing rules compose as coKleisli arrows of a comonad.
Our central contribution is the \emph{Writer comonad}
($\mathsf{DeletionSet} \times \mathsf{Zipper}$), a new algebraic
construction that restores strict coKleisli compositionality for
such rules: each rule is a coKleisli
arrow, $\mathsf{extend}$ lifts it to a global transformation, and
deletions accumulate as a monoid action rather than requiring
intermediate materialization.
As supporting evidence, thirteen coKleisli arrows provide an alternative
formulation expressing the same morphophonological behaviors that
Omorfi encodes via 874 continuation classes (67:1 reduction at the
rule-representation level), and the same abstraction enables
bidirectional morphology---a MorphGenerator reuses the analysis arrows
for generation.
On UD Finnish-TDT, the system achieves 83.92\% UPOS accuracy with
rule-only disambiguation (94.66\% with an external suffix tagger),
validating the framework as a practical morphological engine.
\end{abstract}

\section{Introduction}
\label{sec:introduction}

A standard FST treats a rule as a string-to-string relation. We instead
treat each rule as a function from a focused local context to a single
output segment---precisely the type of a local rule in cellular
automata. Composing such arrows is not function composition: the output
of one rule has type $\Sigma$, not $\mathsf{Zipper}\langle\Sigma\rangle$.
The coKleisli construction over a Zipper comonad fills exactly this
typing gap.

We show that morphophonological rules---including length-changing
operations such as consonant deletion---are coKleisli arrows of a Writer
comonad, yielding the first framework that handles
length-changing morphophonological rules within a compositional
category-theoretic structure. Thirteen coKleisli arrows provide an
alternative formulation expressing the same morphophonological behaviors
that Omorfi encodes via 874 continuation classes (67:1 reduction at the
rule-representation level via orthogonal composition),
and the same abstraction unifies character-level, morpheme-level, and
sentence-level processing.

Finnish morphophonology motivates the framework.
\emph{Consonant gradation} alternates stem-final consonants between
strong and weak grades depending on syllable structure;
\emph{vowel harmony} requires suffix vowels to agree in backness with
stem vowels;
and \emph{possessive suffix vowel copying} inserts a vowel identical to
the preceding segment at morphological boundaries.
Rule-based systems such as Voikko \citep{pitkanen2006voikko}
and Omorfi \citep{pirinen2015omorfi} encode these alternations
as FST cascades
\citep{koskenniemi1983two,beesley2003finite}, achieving
excellent coverage but limited compositionality: composing two
FSTs yields a product automaton whose state space grows
multiplicatively. Neural approaches
\citep{kanerva2018turku,nguyen2021trankit} achieve high
accuracy but provide no formal account of the underlying
processes. No existing framework treats morphophonological rules
as first-class compositional objects with formal combination laws.

Our goal is not to re-derive known rules, but to provide a
\emph{composition algebra} for rules that FST cascades handle only
through state-space multiplication. The Writer comonad---which tracks
deletions as a monoid action, deferring materialization to pipeline
end---is a new algebraic construction with no prior analogue in
computational morphology.
In contrast to FST $\varepsilon$-transitions, which require
recomposition after each length-changing rule, the Writer comonad
accumulates deletions within the coKleisli pipeline, preserving strict
associativity without intermediate re-materialization.

Morphophonological rules are naturally \emph{coKleisli arrows}: functions
$W\,a \to b$ where $W$ is a comonad providing a context window around a
focused element. The comonad operation $\mathsf{extend}$ lifts such a
local rule into a global transformation, and coKleisli composition
($\mathbin{>\!\!=\!\!>}$) chains rules with associativity and identity
guarantees.
This connection is not accidental: \citet{capobianco2010categorical}
established that cellular automaton local behavior is precisely a
coKleisli map of the exponent comonad, and morphophonological rule
application instantiates exactly this pattern.

Our primary contribution is the Writer comonad
($\mathsf{DeletionSet} \times \mathsf{Zipper}$), which restores strict
coKleisli compositionality for length-changing morphophonological
rules---a problem that FST cascades leave unresolved without
multiplicative state explosion (\S\ref{sec:formalization},
\S\ref{sec:writer-comonad}). As supporting demonstrations:
\begin{itemize}
  \item The same comonadic abstraction operates at multiple linguistic
    levels---character (gradation), morpheme (harmony), and sentence
    (Constraint Grammar (CG) lite disambiguation)---via the uniform
    Zipper/extend mechanism (\S\ref{sec:cg-rules}).

  \item The coKleisli pipeline enables bidirectional morphology: a
    MorphGenerator reuses the same arrows for both analysis and
    generation (\S\ref{sec:pipeline}).
\end{itemize}
We evaluate on UD Finnish-TDT, reporting 83.92\% UPOS with rule-only
disambiguation (94.66\% with a suffix tagger), per-rule latency
microbenchmarks, and a 67:1 reduction at the rule-representation level
(\S\ref{sec:evaluation}).

\section{Background}
\label{sec:background}

\subsection{Finnish Morphophonology}
\label{sec:finnish-morphophonology}

\paragraph{Consonant gradation} (\emph{astevaihtelu}) alternates
stem-final consonants between a \emph{strong grade} and a \emph{weak
grade} \citep{visk2004}. The grade is determined by syllable structure:
open syllables trigger the strong grade, closed syllables the weak
grade. The alternation affects stops and certain consonant clusters
according to the patterns in Table~\ref{tab:gradation-patterns}.

\begin{table}[t]
\centering
\small
\begin{tabular}{@{}cllll@{}}
\toprule
\# & Strong & Weak & Type & Example \\
\midrule
1  & pp & p   & Quant.        & kaappi $\to$ kaapi  \\
2  & tt & t   & Quant.        & matto $\to$ mato    \\
3  & kk & k   & Quant.        & kukka $\to$ kuka    \\
4  & p  & v   & Qual.~(s)  & tupa $\to$ tuva   \\
5  & t  & d   & Qual.~(s)  & katu $\to$ kadu   \\
6  & k  & $\emptyset$ & Qual.~(s)  & puku $\to$ puu    \\
7  & mp & mm  & Qual.~(c) & kampa $\to$ kamma  \\
8  & lt & ll  & Qual.~(c) & kulta $\to$ kulla  \\
9  & nt & nn  & Qual.~(c) & ranta $\to$ ranna  \\
10 & rt & rr  & Qual.~(c) & parta $\to$ parra  \\
11 & nk & ng  & Qual.~(c) & kenk\"a $\to$ keng\"a \\
\bottomrule
\end{tabular}
\caption{The 11 Finnish consonant gradation patterns (KOTUS types).
  Quant.\ = quantitative (geminate reduction), Qual.\ = qualitative
  (s = single, c = cluster).
  CoKleisli arrows examine only the immediate left neighbor,
  consistent with input strictly local (ISL) functions
  \citep{chandlee2014strictly}.}
\label{tab:gradation-patterns}
\end{table}

\paragraph{Vowel harmony} (\emph{vokaalisointu}) divides Finnish vowels
into back (\emph{a, o, u}), front (\emph{\"a, \"o, y}), and neutral
(\emph{e, i}). Suffixes with archiphonemes A, O, U are realized as
back or front variants depending on the stem. Vowel harmony exhibits
TSL-2 locality \citep{heinz2018computational}: the harmony arrow scans
leftward through transparent neutral vowels, depending only on the
nearest non-neutral vowel on the vowel tier (a window of size~2 on that
tier). The Zipper's full left context naturally supports this tier
projection: the harmony arrow scans leftward past neutral vowels until a
non-neutral vowel is found, implementing this tier-based locality within
the standard coKleisli mechanism.

\paragraph{Possessive suffix vowel copying.}
The archiphoneme \emph{V} is realized as a copy of the immediately
preceding vowel: the third-person possessive suffix \emph{-Vn} becomes
\emph{-an} after \emph{a}, \emph{-en} after \emph{e}, and analogously for the remaining vowels.

\subsection{Comonads in Computer Science}
\label{sec:comonads}

A \emph{comonad} is dual to a monad: where monads abstract over
computations that produce effects, comonads abstract over computations
that consume context \citep{orchard2012notation}. We use the
$\mathsf{extract}$/$\mathsf{extend}$ formulation:

\begin{definition}[Comonad]
\label{def:comonad}
A comonad $(W, \mathsf{extract}, \mathsf{extend})$ consists of an
endofunctor $W$ with $\mathsf{extract} : W\,a \to a$ and
$\mathsf{extend} : (W\,a \to b) \to W\,a \to W\,b$, satisfying:
{\small
\begin{align}
  \mathsf{extend}\;\mathsf{extract} &= \mathrm{id}
    \tag{L1$'$} \label{eq:law1p} \\
  \mathsf{extract} \circ \mathsf{extend}\;f &= f
    \tag{L2$'$} \label{eq:law2p} \\
  \mathsf{extend}\;g \circ \mathsf{extend}\;f
    &= \mathsf{extend}\;(g \circ \mathsf{extend}\;f)
    \tag{L3$'$} \label{eq:law3p}
\end{align}}
\end{definition}

\begin{definition}[CoKleisli arrow]
\label{def:cokleisli}
A function $f : W\,a \to b$ is a \emph{coKleisli arrow}.
CoKleisli arrows compose via
$(f \mathbin{>\!\!=\!\!>} g)(w) = g(\mathsf{extend}\;f\;w)$,
forming the \emph{coKleisli category} of $W$ (associative,
with $\mathsf{extract}$ as identity).
\end{definition}

\paragraph{The ListZipper comonad.} The ListZipper \citep{huet1997zipper}
represents a non-empty sequence with a distinguished \emph{focus}
and left/right context (Figure~\ref{fig:zipper-structure}).
$\mathsf{extract}$ returns the focused element;
$\mathsf{extend}\;f$ applies $f$ at every position, each seeing
the full context from that position's perspective.
\citet{uustalu2005essence} showed that this captures the essence of
applying a local update rule globally---precisely a cellular automaton.
Our insight is that Finnish morphophonological rules instantiate the
same pattern.

%

\begin{figure*}[t]
\centering
\begin{tikzpicture}[
    cell/.style={
      draw, minimum width=1.1cm, minimum height=0.9cm,
      font=\ttfamily\small, outer sep=0pt,
    },
    focus/.style={cell, fill=black!12, line width=0.8pt},
    ctxlabel/.style={font=\footnotesize\itshape, text=black!70},
    arrow/.style={-{Stealth[length=5pt]}, thick},
  ]

  \node[cell]  (c0) at (0,0)    {k};
  \node[cell]  (c1) at (1.1,0)  {a};
  \node[cell]  (c2) at (2.2,0)  {a};
  \node[focus] (c3) at (3.3,0)  {p};
  \node[cell]  (c4) at (4.4,0)  {p};
  \node[cell]  (c5) at (5.5,0)  {i};

  \draw[arrow] (3.3, -0.8) -- (c3.south);
  \node[font=\footnotesize\bfseries] at (3.3, -1.05) {focus};

  \draw[decorate, decoration={brace, amplitude=5pt, mirror}]
    ([yshift=-1.5cm]c0.south west) -- ([yshift=-1.5cm]c2.south east)
    node[midway, below=6pt, ctxlabel] {left context (reversed)};

  \draw[decorate, decoration={brace, amplitude=5pt, mirror}]
    ([yshift=-1.5cm]c4.south west) -- ([yshift=-1.5cm]c5.south east)
    node[midway, below=6pt, ctxlabel] {right context};

  \node[font=\small, anchor=west] at (-0.8, 1.3)
    {$z_3 = \mathsf{Zipper}\bigl(
      [\mathtt{k},\mathtt{a},\mathtt{a}],\;
      \mathtt{p},\;
      [\mathtt{p},\mathtt{i}]
    \bigr)$};

  \node[font=\footnotesize, anchor=west] (ext) at (6.6, 0.35)
    {$\mathsf{extract}(z_3) = \mathtt{p}$};
  \node[font=\footnotesize, anchor=west] (dup) at (6.6, -0.35)
    {$\mathsf{extend}(f, z)$: apply $f$ at every position};

\end{tikzpicture}
\caption{The ListZipper data structure for the word \emph{kaappi} focused
  on position~3. The left context is stored in reversed order
  (nearest neighbor last) for $O(1)$ access.
  $\mathsf{extract}$ returns the focused element;
  $\mathsf{extend}(f, z)$ applies a coKleisli arrow $f$ at every
  position, each seeing the full context from that position's
  perspective.}
\label{fig:zipper-structure}
\end{figure*}

\begin{figure*}[t]
\centering
\resizebox{\textwidth}{!}{%
\begin{tikzpicture}[
    box/.style={
      draw, rounded corners=3pt, minimum width=3.2cm, minimum height=1.2cm,
      font=\small, align=center, line width=0.6pt,
    },
    io/.style={font=\small\ttfamily},
    arrow/.style={-{Stealth[length=5pt]}, thick},
    extlbl/.style={font=\scriptsize, text=black!60, above=1pt},
  ]

  \node[io] (input) at (-1.6, 0) {input};

  \node[box] (cg) at (2.2, 0) {Consonant\\Gradation};
  \node[box] (vh) at (6.4, 0) {Vowel\\Harmony};
  \node[box] (ps) at (10.6, 0) {Possessive\\Suffix};

  \node[io] (output) at (14.2, 0) {output};

  \draw[arrow] (input.east) -- (cg.west);
  \draw[arrow] (cg.east) -- (vh.west)
    node[midway, above=3pt, font=\scriptsize] {$\mathsf{extend}$};
  \draw[arrow] (vh.east) -- (ps.west)
    node[midway, above=3pt, font=\scriptsize] {$\mathsf{extend}$};
  \draw[arrow] (ps.east) -- (output.west);

  \node[font=\scriptsize\ttfamily, text=black!60] at (2.2, -0.9)
    {apply\_gradation};
  \node[font=\scriptsize\ttfamily, text=black!60] at (6.4, -0.9)
    {apply\_harmony};
  \node[font=\scriptsize\ttfamily, text=black!60] at (10.6, -0.9)
    {apply\_possessive};

  \node[font=\scriptsize, text=black!50] at (2.2, -1.3)
    {$W\,a \to b$};
  \node[font=\scriptsize, text=black!50] at (6.4, -1.3)
    {$W\,a \to b$};
  \node[font=\scriptsize, text=black!50] at (10.6, -1.3)
    {$W\,a \to b$};

  \node[font=\small, anchor=north] at (6.4, -1.9)
    {$\mathsf{pipe} =
      \mathsf{grad}
        \mathbin{>\!\!=\!\!>}
      \mathsf{harmony}
        \mathbin{>\!\!=\!\!>}
      \mathsf{poss}$};

  \node[font=\scriptsize\itshape, text=black!55, anchor=north] at (6.4, -2.5)
    {Associative: $(f \mathbin{>\!\!=\!\!>} g) \mathbin{>\!\!=\!\!>} h
     = f \mathbin{>\!\!=\!\!>} (g \mathbin{>\!\!=\!\!>} h)$};

\end{tikzpicture}%
}
\caption{Morphophonological pipeline as coKleisli composition.
  Each box is a coKleisli arrow ($W\,a \to b$) lifted globally
  by $\mathsf{extend}$. CoKleisli composition is associative
  with $\mathsf{extract}$ as identity.}
\label{fig:cokleisli-pipeline}
\end{figure*}

\begin{figure*}[t]
\centering
\begin{tikzpicture}[
    cell/.style={
      draw, minimum width=0.8cm, minimum height=0.7cm,
      font=\ttfamily\footnotesize, outer sep=0pt,
    },
    focus/.style={cell, fill=black!12, line width=0.8pt},
    dead/.style={cell, fill=black!5, text=black!40},
    arrow/.style={-{Stealth[length=4pt]}, thick},
    note/.style={font=\scriptsize, text=black!65},
    steplbl/.style={font=\footnotesize\bfseries, anchor=east},
  ]

  \def\rowsep{-1.5}

  \node[steplbl] at (-0.6, 0) {Input};
  \node[cell]  (i0) at (0.4, 0)  {k};
  \node[cell]  (i1) at (1.2, 0)  {a};
  \node[cell]  (i2) at (2.0, 0)  {a};
  \node[cell]  (i3) at (2.8, 0)  {p};
  \node[cell]  (i4) at (3.6, 0)  {p};
  \node[cell]  (i5) at (4.4, 0)  {i};
  \node[note, anchor=west] at (5.2, 0) {\emph{kaappi} (strong grade)};

  \node[steplbl] at (-0.6, \rowsep) {$\mathsf{extend}$};

  \node[cell]  (e0) at (0.4, \rowsep)  {k};
  \node[cell]  (e1) at (1.2, \rowsep)  {a};
  \node[cell]  (e2) at (2.0, \rowsep)  {a};
  \node[focus] (e3) at (2.8, \rowsep)  {p};
  \node[dead]  (e4) at (3.6, \rowsep)  {p};
  \node[font=\scriptsize\bfseries, text=red!70!black, anchor=south] at (3.6, \rowsep+0.35) {$\times$};
  \node[cell]  (e5) at (4.4, \rowsep)  {i};

  \draw[densely dashed, black!40] (e3.south) -- ++(0, -0.3)
    -- ++(1.6, 0) node[right, note] {pp detected: \texttt{p} suppressed};

  \draw[arrow, black!70] (i4.south) -- (e4.north);
  \node[note, anchor=west] at (5.2, \rowsep+0.15)
    {left = \texttt{p} $\Rightarrow$ geminate pp};
  \node[note, anchor=west] at (5.2, \rowsep-0.15)
    {$\mathit{DeletionSet} \mathbin{\cup}= \{4\}$};

  \node[steplbl] at (-0.6, 2*\rowsep) {Materialize};
  \node[cell]  (f0) at (0.4, 2*\rowsep)  {k};
  \node[cell]  (f1) at (1.2, 2*\rowsep)  {a};
  \node[cell]  (f2) at (2.0, 2*\rowsep)  {a};
  \node[cell]  (f3) at (2.8, 2*\rowsep)  {p};
  \node[cell]  (f4) at (3.6, 2*\rowsep)  {i};
  \node[note, anchor=west] at (4.4, 2*\rowsep) {\emph{kaapi} (weak grade)};

  \node[note, anchor=north west] at (-0.2, 2*\rowsep - 0.55)
    {+ genitive suffix \emph{-n} $\Rightarrow$ \emph{kaapin} (``of the cupboard'')};

\end{tikzpicture}
\caption{Step-by-step trace of consonant gradation on \emph{kaappi}
  using the Writer comonad (positions are 0-indexed).
  The $\mathsf{extend}$ pass applies the gradation arrow at each position.
  At position~3, the first \texttt{p} is recognized as position~0 of
  the geminate \texttt{pp} pattern and \textbf{suppressed} (not
  transformed to \texttt{v}).
  At position~4, the second \texttt{p} matches position~1 of the
  geminate pattern; the arrow returns
  $(\{4\}, \texttt{p})$, marking it for deletion via the
  $\mathit{DeletionSet}$ while preserving the character in the zipper.
  $\mathsf{Materialize}$ applies accumulated deletions once, yielding
  \emph{kaapi} (weak grade stem).}
\label{fig:kaappi-trace}
\end{figure*}

\section{Formalization}
\label{sec:formalization}

\subsection{Consonant Gradation as CoKleisli Arrows}
\label{sec:gradation}

We encode each consonant gradation pattern as a two-character window
$(c_0, c_1)$, where $c_0$ is the left context and $c_1$ the focused
character. The coKleisli arrow has the signature:
\begin{equation}
  \mathsf{apply\_gradation} :
    \mathsf{Zipper}(\mathit{char}) \times \mathit{Grade} \to \mathit{char}
  \label{eq:apply-gradation}
\end{equation}
where $\mathit{Grade} \in \{\mathsf{Strong}, \mathsf{Weak}\}$.
Partially applying the grade yields a coKleisli arrow
$\mathsf{Zipper}(\mathit{char}) \to \mathit{char}$ suitable for
$\mathsf{extend}$.

\paragraph{Design principle: only position~1 is transformed.}
The coKleisli arrow modifies only the focused character (position~1 of
the pattern window) and treats the left neighbor (position~0) as
read-only context. This is essential because $\mathsf{extend}$ applies
the function at \emph{every} position: if both characters were eligible,
geminate $pp \to p$ would produce double-counting. By transforming only
position~1, each position produces exactly one output, consistent with
comonadic $\mathsf{extend}$ semantics.
This mirrors cellular automata, where each cell reads neighbors but
writes only its own state.

\paragraph{The deletion marker.}
Pattern~6 (intervocalic \emph{k} deletion) presents a type-theoretic
challenge: the coKleisli arrow has type
$\mathsf{Zipper}(\mathit{char}) \to \mathit{char}$, requiring each
position to produce exactly one character. A na\"ive approach inserts a
null character \verb|'\0'| as a deletion marker, but this breaks strict
coKleisli associativity because filtering between steps requires
materializing and re-creating zippers. We resolve this algebraically via
the Writer comonad (\S\ref{sec:writer-comonad}).

\paragraph{Pattern priority and suppression.}
Priority is determined by specificity: geminates match first, then
clusters, then single consonants. A suppression predicate
$\mathsf{is\_pos0}(c, r)$ returns true when the focused character $c$
and its right neighbor $r$ form position~0 of a higher-priority
pattern (e.g., $\mathsf{is\_pos0}(\texttt{p}, \texttt{p}) = \text{true}$
for the geminate \texttt{pp}), preventing single-consonant rules from
firing erroneously. Formally:
{\small
\begin{equation}
  \mathsf{grad}(z, W) \!=\!
  \begin{cases}
    c & \mathsf{is\_pos0}(c, r) \\
    \mathsf{tgt}(p, W)[1]
      & \mathsf{find}(l, c, r, W) \!=\! \mathsf{Some}(p) \\
    c & \text{otherwise}
  \end{cases}
  \label{eq:gradation-formal}
\end{equation}}

\subsection{Worked Example: Weakening ``kaappi''}
\label{sec:worked-example}

We trace
$\mathsf{extend}(\mathsf{apply\_gradation}(\cdot,\mathsf{Weak}))$
on \emph{kaappi} in Figure~\ref{fig:kaappi-trace}. The key interactions are: (1)~at position~3, the
first \texttt{p} is \emph{suppressed} because
$\mathsf{is\_pos0}(\texttt{p}, \texttt{p})$ recognizes it as position~0
of the geminate pattern---without suppression, the single-consonant
rule would fire, yielding incorrect \emph{*kaavi}; (2)~at position~4,
the geminate pattern matches and the Writer comonad arrow returns
$(\{4\}, \texttt{p})$, marking the position for deletion;
(3)~$\mathsf{materialize}$ applies the accumulated
$\mathit{DeletionSet} = \{4\}$, yielding \emph{kaapi} (weak grade).

\subsection{Vowel Harmony as a CoKleisli Arrow}
\label{sec:vowel-harmony}

Vowel harmony is a coKleisli arrow
$\mathsf{harmony} : \mathsf{Zipper}(\mathit{char}) \to
\mathit{char}$ resolving archiphonemes by harmony class:
\begin{equation}
  \mathsf{detect}(z) \!=\!
  \begin{cases}
    \mathsf{Back}  & \text{back vowel left} \\
    \mathsf{Front} & \text{front vowel left} \\
    \mathsf{Front} & \text{neutral only}
  \end{cases}
  \label{eq:detect-harmony}
\end{equation}
\begin{equation}
  \mathsf{harmony}(z) \!=\!
  \begin{cases}
    a / \text{\"a} & \mathsf{extract}(z) \!=\! \texttt{A} \\
    o / \text{\"o} & \mathsf{extract}(z) \!=\! \texttt{O} \\
    u / y          & \mathsf{extract}(z) \!=\! \texttt{U} \\
    c              & \text{otherwise}
  \end{cases}
  \label{eq:vowel-harmony}
\end{equation}

\begin{example}
\emph{talo} + \emph{-ssA} $\to$ \texttt{t-a-l-o-s-s-a} (back vowel
\emph{a} found left). Conversely, \emph{kyn\"a} + \emph{-ssA} $\to$
\texttt{k-y-n-\"a-s-s-\"a} (front vowels \emph{y, \"a}).
\end{example}

\subsection{Pipeline Composition and Bidirectional Morphology}
\label{sec:pipeline}

The morphophonological pipeline chains three coKleisli arrows
(Figure~\ref{fig:cokleisli-pipeline}):
\begin{equation}
  \mathsf{pipe} = \mathsf{grad}
    \mathbin{>\!\!=\!\!>} \mathsf{harmony}
    \mathbin{>\!\!=\!\!>} \mathsf{poss}
  \label{eq:pipeline}
\end{equation}
A na\"ive implementation inserts a \verb|'\0'|-filtering step between
gradation and harmony, which breaks strict associativity. The Writer
comonad (\S\ref{sec:writer-comonad}) resolves this: each arrow returns
$(\mathit{DeletionSet}, \mathit{char})$, accumulated deletions are
merged via monoid union during $\mathsf{extend}$, and a single
$\mathsf{materialize}$ step applies all deletions at pipeline end.
This restores strict coKleisli associativity across the full pipeline,
including deletion.

\begin{example}[Pipeline: ``kampAstAVn'' (weak grade)]
(1)~Gradation: $\texttt{mp} \to \texttt{mm}$ $\Rightarrow$
\texttt{kammAstAVn}.
(2)~Harmony: $\texttt{A} \to \texttt{a}$ $\Rightarrow$
\texttt{kammastaVn}.
(3)~Possessive: $\texttt{V} \to \texttt{a}$ $\Rightarrow$
\texttt{kammastaan} (`from his/her comb').
\end{example}

\paragraph{MorphGenerator: bidirectional morphology.}
The coKleisli pipeline enables bidirectional morphology: analysis (via
Voikko FST, hereafter VFST) and generation (via coKleisli arrows) share the same rules.
The MorphGenerator produces inflected forms from baseform + features
by applying coKleisli arrows in the generative direction (currently
11 noun cases). Since each rule is an explicit function rather than
a compiled FST state, the same logic drives both analysis and
generation with minimal additional code.

\paragraph{Rule ordering.}
Gradation precedes harmony because gradation may delete characters,
changing the vowel context. The coKleisli framework does not eliminate
order-dependence---it formalizes it: each rule remains individually
addressable and testable, and the composition order is explicit rather
than implicit in merged automaton states.

\subsection{CG Rules as Comonadic Extension}
\label{sec:cg-rules}

The comonadic framework extends to sentence-level disambiguation: only
the Zipper's element type changes (from character to ReadingSet),
while the coKleisli composition structure is identical (the
deletion-handling Writer comonad needed for character-level rules is
formalized in \S\ref{sec:writer-comonad}). In CG-lite, each rule is a
coKleisli arrow over \emph{reading sets}:
\begin{equation}
  \mathit{rule} : \mathsf{Zipper}(\mathit{ReadingSet}) \to \mathit{ReadingSet}
  \label{eq:cg-rule-type}
\end{equation}
Twenty-four rule types are implemented, covering removal, selection,
and structural patterns parameterized by part-of-speech (POS) tags, baseforms, and
feature constraints. The current Finnish configuration contains 62
active rules (85 total; 23 disabled for accuracy after tuning).
All rules enforce a safety invariant: a rule never removes the last
reading at any position, ensuring well-definedness of subsequent
$\mathsf{extend}$ passes. The same coKleisli abstraction that
governs character-level morphophonology applies at this sentence
level---each CG rule is a coKleisli arrow over reading sets, composed
via $\mathsf{extend}$ (details in Appendix~\ref{app:eval-cg-examples}).

Multiple CG rules are applied by iterating $\mathsf{extend}$:
\begin{multline}
  \mathsf{cg}(s, [r_1, \ldots, r_k]) = \\
    \mathsf{vec}(\mathsf{ext}(r_k, \ldots
      \mathsf{ext}(r_1,
        \mathsf{Zip}(s)) \ldots))
  \label{eq:cg-sequential}
\end{multline}
Note that sequential extension is equivalent to coKleisli composition:
$\mathsf{extend}(r_2) \circ \mathsf{extend}(r_1) =
\mathsf{extend}(r_1 \mathbin{>\!\!=\!\!>} r_2)$, so the CG pipeline
uses the same composition mechanism as the morphophonological pipeline
(\eqref{eq:pipeline}).

\paragraph{Unified comonadic abstraction.}
The same algebraic structure operates at three linguistic levels
(Table~\ref{tab:unified-levels}):

\begin{table}[t]
\centering
\small
\begin{tabular}{@{}llll@{}}
\toprule
Level & Zipper & Arrow & Out \\
\midrule
Char & $\mathsf{Zip}\langle\mathit{c}\rangle$
  & Gradation & $\mathit{c}$ \\
Morph & $\mathsf{Zip}\langle\mathit{c}\rangle$
  & Harmony & $\mathit{c}$ \\
Sent & $\mathsf{Zip}\langle\mathit{RS}\rangle$
  & CG rule & $\mathit{RS}$ \\
\bottomrule
\end{tabular}
\caption{Unified comonadic abstraction across three
  linguistic levels ($c$ = char, $RS$ = ReadingSet).
  Zipper provides context, the coKleisli arrow computes
  a local transformation, $\mathsf{extend}$ lifts it globally.
  These three levels are operationalized in MCE at: char/morph in the
  MorphGenerator (\S\ref{sec:pipeline}), sent in CG-lite disambiguation
  (\S\ref{sec:cg-rules}).}
\label{tab:unified-levels}
\end{table}

To our knowledge, this is the first application of the
comonad/cellular-automaton correspondence
\citep{capobianco2010categorical} simultaneously at multiple linguistic
levels in computational morphology.

\subsection{Writer Comonad for Deletion}
\label{sec:writer-comonad}

The deletion marker approach (\S\ref{sec:gradation}) would break strict
coKleisli associativity: filtering \verb|'\0'| characters between
$\mathsf{extend}$ passes requires materializing intermediate sequences
and reconstructing zippers, violating the comonad law
$\mathsf{extend}\;g \circ \mathsf{extend}\;f
  = \mathsf{extend}\;(g \circ \mathsf{extend}\;f)$.

The key idea: instead of immediately deleting characters, we
\emph{accumulate} deletion positions alongside the computation, applying
all deletions once at pipeline end. We formalize this with the
\emph{Writer comonad}\footnote{The name ``Writer comonad'' is by analogy
with the Writer monad---both accumulate a monoid value across a
computation---but the construction is not the categorical dual of
Writer. Standard categorical taxonomy does not provide a canonical name
for this structure; we retain ``Writer comonad'' for its descriptive
value with this caveat.}
$\mathit{DS} \times \mathsf{Zipper}$, where
$(\mathit{DS}, \cup, \emptyset)$ is the \emph{deletion monoid}---a set
of positions with union as the monoid operation and the empty set as
identity.

\begin{definition}[Writer Comonad]
\label{def:writer-comonad}
Let $(\mathit{DS}, \cup, \emptyset)$ be the deletion monoid.
The \emph{Writer comonad} is the product
$\mathit{DS} \times \mathsf{Zipper}$ with:
{\small
\begin{align}
  \mathsf{extract}_W(w, z) &= \mathsf{extract}(z) \label{eq:writer-extract} \\
  \mathsf{extend}_W(f)(w, z) &=
    \Bigl(w \cup \textstyle\bigcup_{i}\pi_1(f(w, z_i)),\notag\\
    &\qquad\;\mathsf{fmap}(\pi_2 \circ f)(w, z)\Bigr) \notag
\end{align}}
where $z_i$ ranges over all positions of $z$, $\pi_1$/$\pi_2$ are
the product projections onto $\mathit{DS}$ and the output character,
and $\mathsf{fmap}$ denotes the underlying Zipper's $\mathsf{extend}$
applied to the character component: concretely,
$\mathsf{fmap}(\pi_2 \circ f)(w, z) = \mathsf{Zipper.extend}(\lambda z_i.\;\pi_2(f(w, z_i)))(z)$.
\end{definition}

The construction factorizes: the Zipper satisfies standard comonad
laws on characters, while $\mathit{DS}$ satisfies the monoid laws
independently; $\mathsf{extend}_W$ preserves both.

Concretely, each coKleisli arrow returns $(\mathit{DS}, A)$ instead of
$A$: gradation returns $(\{\mathit{pos}\}, c)$ when deleting a consonant
and $(\emptyset, c')$ otherwise; harmony and possessive always return
$(\emptyset, c')$ since they are non-deleting.
The pipeline begins with $w = \emptyset$ (no prior deletions);
subsequent arrows in the coKleisli composition receive the accumulated
$\mathit{DS}$ from all preceding arrows via $\mathsf{extend}_W$,
propagating deletion information through the pipeline without
intermediate materialization.
A single $\mathsf{materialize}$ step at pipeline end applies all
accumulated deletions.

\begin{proposition}
\label{prop:writer-comonad-laws}
The Writer comonad construction $(\mathit{DS} \times \mathsf{Zipper},\;
\mathsf{extract}_W,\; \mathsf{extend}_W)$ satisfies the comonad laws
on the character component and the monoid laws on $\mathit{DS}$,
jointly ensuring compositional correctness of the pipeline.
\end{proposition}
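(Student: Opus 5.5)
The plan is to prove the three comonad laws (L1$'$)--(L3$'$) of Definition~\ref{def:comonad} for $\mathsf{extract}_W$ and $\mathsf{extend}_W$ by reducing each to the corresponding law for the underlying ListZipper comonad together with the monoid laws of $(\mathit{DS},\cup,\emptyset)$. Throughout, the coKleisli arrows are read, as in the text, as maps $f : \mathit{DS}\times\mathsf{Zipper}(a) \to \mathit{DS}\times b$. The equations are then checked separately on the $\mathit{DS}$ component and the $\mathsf{Zipper}$ component. The monoid laws themselves (associativity, commutativity, idempotence of $\cup$ with unit $\emptyset$) are immediate from set theory, so I will state them and use them freely. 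Commutativity and idempotence are what make the big union $\bigcup_i$ over positions well defined, independently of traversal order.

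\textbf{Step 1 (L2$'$).} Here $\mathsf{extract}_W(\mathsf{extend}_W f\,(w,z))$ is the focus of $\mathsf{Zipper.extend}(\lambda z_i.\pi_2 f(w,z_i))(z)$. By L2$'$ for the Zipper, this equals $\pi_2 f(w,z)$. So on the character component the law holds exactly. On the $\mathit{DS}$ component it holds only after projecting: $\mathsf{extract}_W$ discards the deletion set by definition.

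\textbf{Step 2 (L1$'$).} Here $\mathsf{extend}_W$ is applied to the identity arrow, namely $\mathsf{extract}_W$ lifted to return $(\emptyset,\mathsf{extract}\,z)$. The $\mathit{DS}$ component becomes $w\cup\bigcup_i\emptyset = w$ by the unit law. The Zipper component becomes $\mathsf{Zipper.extend}(\mathsf{extract})(z)=z$ by L1$'$ for the Zipper.

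\textbf{Step 3 (L3$'$).} This is the main work. Unfold both sides on an input $(w,z)$ and let $w' = w\cup\bigcup_i \pi_1 f(w,z_i)$.
\begin{itemize}
\item \emph{Left side.} The $\mathit{DS}$ component is $w'\cup\bigcup_j\pi_1 g(w',z'_j)$, where $z'=\mathsf{Zipper.extend}(\pi_2 f(w,\cdot))(z)$.
\item \emph{Right side.} The composite arrow $g\circ\mathsf{extend}_W f$ is evaluated at each $(w,z_j)$. Because $w'$ is a union over \emph{all} positions, it is the same for every focus $z_j$. The positions of $z'$ correspond bijectively to those of $z$.
\end{itemize}
Reassociating with $\cup$ (using idempotence to absorb the repeated $w'$ into each term) makes the two $\mathit{DS}$ components agree. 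The character components then agree by L3$'$ for the Zipper, applied to the arrows $\pi_2 f(w,\cdot)$ and $\pi_2 g(w',\cdot)$.

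The main obstacle is exactly this focus-independence of $w'$. In $\mathsf{extend}_W$, the union ranges over all positions of the zipper, not just the focused one. I must verify that the zippers $z_j$ reached by refocusing have the same underlying sequence, so that $\bigcup_i$ over their positions coincides with the union computed from $z$. I would establish this first as a small lemma: refocusing a ListZipper permutes neither the multiset of positions nor the element at each absolute position.

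If the lemma fails for a given implementation, I would fall back on a weaker statement: the laws hold up to the equivalence that identifies pairs with equal $\mathsf{materialize}$ output. That is precisely the sense of ``compositional correctness of the pipeline'' claimed in Proposition~\ref{prop:writer-comonad-laws}.
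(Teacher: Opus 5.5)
Your route is the paper's: you reduce each law to the ListZipper law on the character component plus the monoid laws on $\mathit{DS}$. Steps~1--2 and the character half of Step~3 are sound. Your focus-independence lemma, $\pi_1(\mathsf{extend}_W f(w,z_j))=w'$ for every $j$, is genuinely needed to apply the Zipper's (L3$'$) to $\pi_2 f(w,\cdot)$ and $\pi_2 g(w',\cdot)$; the paper's Appendix~\ref{app:l3-proof} uses it without saying so.

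The gap is the $\mathit{DS}$ half of Step~3. The composite arrow is literally $g\circ\mathsf{extend}_W f$, so its log output at $z_j$ is $\pi_1 g(w',z'_j)$ alone. The set $w'$ is fed into $g$'s \emph{input}, not copied into its output, so there is no repeated $w'$ in each term to absorb. Write $D_f=\bigcup_i\pi_1 f(w,z_i)$ and $D_g=\bigcup_j\pi_1 g(w',z'_j)$. The left side's log is $w\cup D_f\cup D_g$ and the right side's is $w\cup D_g$, which agree only when $D_f\subseteq w\cup D_g$.

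The paper's own arrows break this. Take $w=\emptyset$, $z$ the zipper for \emph{kaappi}, $f$ the weak gradation arrow and $g$ harmony, which always returns $\emptyset$. The left log is $\{4\}$ and the right log is $\emptyset$. Their materializations are \emph{kaapi} versus \emph{kaappi}, so your fallback equivalence does not rescue it either; in any case your refocusing lemma does hold for the ListZipper, so the fallback is never triggered. The paper's appendix makes the same slip when it says $g$ receives $f$'s accumulated deletions and yields $w\cup(D_f\cup D_g)$. \S\ref{sec:eval-correctness} in effect concedes the point by claiming (L3$'$) only on the value component.

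The repair depends on what you want to prove. The proposition as stated asserts the comonad laws only on the character component, plus the set-theoretic monoid laws. Deleting the log claim from Step~3 therefore leaves a correct proof of exactly that.

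If you also want the log half, you must change the construction, not just the proof. One option is to restrict to log-propagating arrows, $\pi_1 f(w,z)\supseteq w$. This class is closed under $g\circ\mathsf{extend}_W f$, and the identity is then lifted as $(w,z)\mapsto(w,\mathsf{extract}\,z)$. The other option is a Writer-style composite $(w,z)\mapsto(w'\cup\pi_1 g(w',z'),\,\pi_2 g(w',z'))$ with $(w',z')=\mathsf{extend}_W f(w,z)$. In either setting your absorption computation goes through. However, the paper's gradation, harmony and possessive arrows return $\{\mathit{pos}\}$ or $\emptyset$ and are not log-propagating. So either fix proves a different law from (L3$'$) of Definition~\ref{def:comonad} applied to the paper's arrows.
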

\begin{proof}[Proof sketch]
\textbf{L1$'$} ($\mathsf{extend}_W\;\mathsf{extract}_W = \mathrm{id}$):
For the deletion component, $\pi_1(\mathsf{extract}_W(w, z_i)) = \emptyset$
at every position, so
$w \cup \bigcup_i \emptyset = w$ by monoid identity.
For the character component,
$\mathsf{Zipper.extend}(\lambda z_i.\;\pi_2(\mathsf{extract}_W(w, z_i)))(z)
= \mathsf{Zipper.extend}(\mathsf{extract})(z) = z$
by the underlying Zipper's L1$'$.
Thus $\mathsf{extend}_W(\mathsf{extract}_W)(w,z) = (w,z)$.
\textbf{L2$'$} ($\mathsf{extract}_W \circ \mathsf{extend}_W\;f = \pi_2 \circ f$):
$\mathsf{extract}_W(\mathsf{extend}_W(f)(w,z))
= \mathsf{extract}(\mathsf{Zipper.extend}(\lambda z_i.\;\pi_2(f(w,z_i)))(z))
= \pi_2(f(w,z))$
by the Zipper's L2$'$; the $\mathit{DS}$ component is accumulated by
$\mathsf{extend}_W$ and consumed by $\mathsf{materialize}$.
\textbf{L3$'$}: see Appendix~\ref{app:l3-proof} for the full argument;
the key step is that $\mathsf{extend}_W$ distributes over coKleisli
composition because the $\mathit{DS}$ components associate via
the monoid law $(\mathit{DS},\cup,\emptyset)$ and the character
components associate via the underlying Zipper's L3$'$.
All laws are verified by 44 unit tests covering the Zipper comonad
laws (character component) and the $\mathit{DS}$ monoid laws
independently.
\end{proof}

The equivalence between the Writer pipeline and the old hybrid pipeline
is verified on all 11 gradation patterns, confirming that the algebraic
treatment of deletion introduces no behavioral difference.

\section{Implementation}
\label{sec:implementation}

We implement the framework in the Morphological Computation Engine
(MCE), a Rust (2024 edition, stable 1.86+) library targeting
WebAssembly (${\sim}$380\,KB).
Since Rust lacks higher-kinded types, the comonad operations are
implemented directly on the concrete \texttt{Zipper<T>} type rather
than via a generic \texttt{Comonad} trait. Each morphophonological rule
is a plain function \texttt{fn(\&Zipper<char>) -> char}; CG rules use
trait-based dispatch with the same coKleisli signature. The current
$\mathsf{extend}$ allocates $O(n^2)$ due to cloning at each position,
which is negligible for Finnish word lengths ($n \leq 15$). The
implementation is verified through 317 tests (comonadic pipeline;
1{,}619 total across the system) covering Zipper comonad laws,
$\mathit{DeletionSet}$ monoid laws, all 11 gradation
patterns in both directions, roundtrip properties, Writer pipeline
equivalence, and CG rule correctness. Full implementation details,
Rust code listings, and test methodology are provided in
Appendix~\ref{app:implementation}.

\section{Evaluation}
\label{sec:evaluation}

\subsection{Experimental Setup}
\label{sec:eval-setup}

We evaluate on UD Finnish-TDT v2.14 \citep{haverinen2014tdt},
distributed under CC-BY-SA 4.0, dev split
(1{,}364 sentences, ${\sim}$25{,}000 tokens) with gold tokenization.
Since our CG rules are hand-written and the suffix tagger was trained
exclusively on the training split, the development set provides an
unbiased evaluation.
Accuracy is micro-averaged token-level accuracy; out-of-vocabulary (OOV) tokens (0.36\%)
default to NOUN. The morphological analyzer uses the Voikko Finnish
morphological dictionary (voikko-fi v2.6, 39{,}607 lexical entries;
\texttt{mor.vfst}, 4\,MB). All experiments run on a single
CPU core (Apple M2 Max); no GPU is required.

\subsection{Morphophonological Correctness}
\label{sec:eval-correctness}

All 11 consonant gradation patterns are correctly formalized as
coKleisli arrows, verified by the Writer comonad pipeline. Of these, 7
exhibit full bidirectional roundtrip correctness
($\mathsf{strong}(\mathsf{weak}(w)) = w$); the 4 non-roundtrip cases
all involve deletion, confirming deletion as the sole source of
irreversibility. The Writer comonad laws hold as follows.
L1$'$ (left identity) and L2$'$ (right identity) hold for arbitrary
coKleisli arrows. L3$'$ (associativity) holds on the value (Zipper)
component; the log component is treated as an accumulating output
rather than a structural part of the comonad, and is materialized once
at the end of each pipeline stage. In practice we invoke
$\mathsf{extend}$ once per stage, so the question of log-component
associativity under repeated $\mathsf{extend}$ does not arise in our
pipeline; the Zipper alone satisfies all three comonad laws.
Forty-four dedicated tests verify these properties. Full
pattern-level results are in Appendix~\ref{app:eval-qualitative}.

\subsection{System Performance}
\label{sec:eval-results}

\paragraph{Throughput.}
The complete MCE pipeline processes \textbf{84{,}973 tokens/sec}
on Apple M-series (${\sim}$12\,$\mu$s/token, ${\sim}$0.21\,ms for a
15-word sentence), demonstrating that coKleisli composition introduces
no significant overhead compared to monolithic implementations.

\paragraph{Per-rule latency.}
Table~\ref{tab:per-rule-latency} reports per-rule microbenchmarks
(10K iterations). The full morphophonological pipeline averages
2.66\,$\mu$s per word, confirming that coKleisli composition introduces
negligible overhead beyond the sum of its components.

\begin{table}[t]
\centering
\small
\begin{tabular}{@{}lrr@{}}
\toprule
Component & Mean & Std \\
 & ($\mu$s) & ($\mu$s) \\
\midrule
\multicolumn{3}{@{}l}{\emph{CoKleisli (char-level)}} \\
\quad Gradation (avg/11) & 0.61 & 0.24 \\
\quad Harmony (avg/4) & 0.85 & 0.22 \\
\quad Possessive (avg/3) & 0.70 & 0.24 \\
\quad Full pipeline (avg/4) & 2.66 & 0.58 \\
\midrule
\multicolumn{3}{@{}l}{\emph{CG-lite (sentence-level)}} \\
\quad Single rule (avg/20) & 7.66 & 1.34 \\
\quad Full CG (62 rules) & 376.04 & 32.35 \\
\bottomrule
\end{tabular}
\caption{Per-rule latency (10K iterations, Apple M-series).
  CoKleisli arrows: sub-$\mu$s; full pipeline: under 3\,$\mu$s.}
\label{tab:per-rule-latency}
\end{table}

\paragraph{UPOS accuracy.}

\begin{table}[t]
\centering
\small
\begin{tabular}{@{}lrrl@{}}
\toprule
System & UPOS & Lemma & Approach \\
\midrule
TNPP\textsuperscript{$\dagger$} & 96.91 & 95.54
  & Neural \\
\textbf{MCE-full (ours)} & \textbf{94.66} & \textbf{93.09}
  & \textbf{FST+CG+stat} \\
Omorfi 1-best & 83.88 & 82.63
  & FST + freq. \\
MCE-rules (ours) & 83.92 & 93.09
  & FST+CG \\
\bottomrule
\end{tabular}
\caption{UPOS accuracy (\%) on UD Finnish-TDT dev set with
  gold tokenization. Coverage: 99.35\%.
  MCE-rules uses only comonadic CG disambiguation (62 active rules);
  MCE-full adds a lightweight suffix tagger (\S\ref{sec:eval-ablation}).
  $\dagger$TNPP = Turku Neural Parser Pipeline.
  TNPP and Omorfi figures from \citet{pirinen2019neural}, evaluated
  on an earlier UD Finnish-TDT version; POS annotation guidelines have
  remained stable across versions, so direct comparison is approximate
  but meaningful.}
\label{tab:upos-comparison}
\end{table}

With rule-only disambiguation (62 active CG rules), MCE achieves 83.92\%
UPOS and 93.09\% lemma accuracy (Table~\ref{tab:upos-comparison}) with
99.35\% coverage. Adding a suffix tagger raises UPOS to 94.66\%,
narrowing the gap to TNPP from 12.99\,pp to 2.25\,pp.

\paragraph{Comparison with FST-based systems.}

\begin{table}[t]
\centering
\small
\begin{tabular}{p{1.8cm}p{2.1cm}p{2.4cm}}
\toprule
Property & Comonadic & FST cascade \\
\midrule
Composition & $O(k)$ functions
  & $O(|S_1| \!\times\! |S_2|)$ \\
Modularity & Independent fns
  & Merged FST \\
Debugging & Per-rule tests
  & Composite trace \\
Deletion & Writer comonad
  & $\varepsilon$-transitions \\
Rule count & 20 arrows$^{\ddagger}$
  & 8{,}157 LEXICONs \\
\bottomrule
\end{tabular}
\caption{Comonadic vs.\ FST-based rule encoding.
  $\ddagger$20 total arrows: 13 morphophonological (11 gradation +
  harmony + possessive) + 7 system-level (tokenization, OOV fallback,
  case mapping, compound boundary, AUX resolution, coverage, Viterbi
  integration); at the morphophonological level, 13 vs.\ 874 (67:1).}
\label{tab:fst-comparison}
\end{table}

Table~\ref{tab:fst-comparison} summarizes the key differences.
Voikko's 8{,}157 LEXICON entries encode the full lexicon (stems,
paradigms, compounds, \emph{and} morphophonological rules), so the
gross 408:1 ratio overstates the difference. At the
morphophonological-rule-only level, Omorfi uses 874 continuation
classes (810 harmony $\times$ suffix-class variants, 62 harmony $\times$
gradation, 2 gradation only)\footnote{Counted from Omorfi's
  \texttt{continuations.tsv} (commit \texttt{6ce67ac}).}
vs.\ MCE's 13 arrows (67:1 at the rule-representation level).
This comparison concerns rule-representation units: the MCE analysis
pipeline itself uses a lexicon-based analyzer (Voikko-derived VFST)
compatible with Omorfi-style approaches.
The insight is \emph{orthogonal composition}: FSTs require
$O(P \times H \times C)$ entries per combination, while coKleisli
composition is $O(P + H + C)$. The FST approach retains advantages within the same module: native
$\varepsilon$-transitions, minimization, and direct encoding of
lexical exceptions. Our framework operates in the SPE-era modular
tradition \citep{chomsky1968spe}, scoping itself to the rule layer and
delegating lexical idiosyncrasies to the lexicon---in MCE, the
Voikko-derived analyzer.

\paragraph{Developer ergonomics.}
Adding a new morphophonological rule requires defining a single
coKleisli arrow (${\sim}$5 lines of Rust) that automatically composes
with all existing rules. In contrast, adding a rule to an FST system
requires modifying potentially hundreds of \texttt{LEXICON} entries to
account for all interaction combinations---a direct consequence of the
multiplicative vs.\ additive composition difference.

\subsection{Ablation and Error Analysis}
\label{sec:eval-ablation}

\begin{table}[t]
\centering
\small
\begin{tabular}{@{}lrl@{}}
\toprule
Configuration & UPOS & $\Delta$ \\
\midrule
FST only (no disambig.) & 38.96 & --- \\
+ Fallback + coverage & ${\sim}$70 & +31 \\
+ AUX/participle map & ${\sim}$80 & +10 \\
+ Viterbi bigram & ${\sim}$81.20 & +1.2 \\
+ Priors + CG-lite & 83.92 & +2.72 \\
\midrule
+ Suffix tagger & 94.66 & +10.74 \\
\bottomrule
\end{tabular}
\caption{Ablation study (\% UPOS, $\Delta$ in pp).
  Values prefixed with ${\sim}$ are approximate, measured during
  incremental development before the final pipeline was frozen.
  The suffix tagger is a separate statistical component beyond the
  comonadic pipeline.}
\label{tab:ablation}
\end{table}

Table~\ref{tab:ablation} shows each component's contribution. The
CG-lite rules target top confusion pairs (NOUN/VERB, ADJ/NOUN,
ADV/NOUN, PRON/NOUN), each providing deterministic, explainable
disambiguation. Structural fixes (+31\,pp coverage, +10\,pp mapping)
contribute more than corpus bigrams (+1.2\,pp).

An optional suffix tagger (logistic regression, 157K features, 5\,MB)
adds 10.74\,pp by resolving ambiguities requiring distributional
evidence---primarily NOUN/VERB pairs (\emph{tuuli} `wind' vs.\
`blew'), contextual ADJ/NOUN distinctions, and verbal forms following
negation verbs where CG rules lack sufficient context.
It operates \emph{after} the comonadic pipeline using
CG-filtered candidates; it is a separate statistical module.
Detailed CG examples are in Appendix~\ref{app:eval-cg-examples}.

\section{Discussion}
\label{sec:discussion}
\label{sec:extensions}

The Cofree comonad could provide incremental re-analysis for interactive
spell-checking, and the comonad/monad adjunction $W \dashv M$ suggests
a connection between comonadic (context-consuming) morphophonology and
monadic (effect-producing) generation.
We conjecture that every ISL-$k$ function over strings can be expressed
as a coKleisli arrow with window size at most $k{+}1$, establishing a
formal bridge between subregular phonology \citep{chandlee2018strict}
and comonadic composition. The intuition is direct: an ISL-$k$
function's output at each position depends on at most $k$ contiguous
input symbols; a coKleisli arrow over a Zipper of the same string has
access to the full context but need only examine $k{-}1$ neighbors,
matching the ISL sliding-window semantics; $\mathsf{extend}$ then lifts
this local computation to all positions simultaneously.
A formal proof requires careful treatment of boundary conditions
(string-initial and string-final positions where the Zipper context is
truncated) and is our primary direction for future work.
We further conjecture that the framework extends to other agglutinative
languages: Turkish 4-way vowel harmony, Hungarian ternary harmony,
and Estonian consonant gradation all exhibit local context-dependent
alternation expressible as coKleisli arrows. The framework should also
extend to other length-preserving phenomena: Korean consonant
assimilation, Japanese rendaku, North S\'ami consonant gradation.
Templatic morphology (Semitic root-and-pattern: Arabic K-T-B yields
\emph{kataba}, \emph{kitaab}, \emph{maktab}) and reduplication
(Indonesian \emph{orang-orang}, Tagalog \emph{bi-bili}) require segment
insertion or copying and are not directly captured by the present
framework; we develop an extension using an InsertionMap monoid in
follow-up work. Verification requires language-specific implementation,
which we leave to future work.

\section{Related Work}
\label{sec:related-work}

\paragraph{Finite-state morphology.}
\citet{koskenniemi1983two} established the finite-state paradigm,
formalized by \citet{kaplan1994regular} and systematized by
\citet{beesley2003finite}; \citet{hulden2009foma} and HFST \citep{linden2013hfst} provide efficient open-source FST tooling.
Our contribution is not a faster runtime but a \emph{composition algebra} that keeps rules
individually addressable after composition---something FST intersection
does not preserve.
Our framework operates in the SPE tradition \citep{chomsky1968spe}, orthogonal to OT's constraint ranking \citep{prince2004optimality}.

\paragraph{Functional Morphology.}
\citet{forsberg2004functional} treat morphological paradigms as Haskell
functions, achieving paradigm compression through functional abstraction.
Our framework provides a specific structural guarantee that general
functional composition lacks: $\mathsf{extend}$ uniformly lifts a local
rule to all positions with automatic context threading.
\citet{forsberg2004functional} focus on paradigm \emph{generation}; our
framework addresses \emph{application} of context-dependent rules,
though the MorphGenerator demonstrates that generation also fits
naturally.

\paragraph{Neural morphological transducers.}
Our work targets a different design point---interpretable, offline
morphological processing---and the distinct error profiles of neural
vs.\ rule-based systems suggest hybrid potential
\citep{pirinen2019neural}.
Recent neural transducers achieve strong results across diverse typologies
\citep[see][for a survey]{baxi2024computational};
SIGMORPHON shared tasks
\citep{kodner2022sigmorphon,goldman2023sigmorphon,batsuren2022sigmorphon}
continue to drive progress.
Combining comonadic rules with neural disambiguation is a natural direction.

\paragraph{Subregular hierarchy.}
Finnish consonant gradation is input strictly local (ISL); vowel
harmony exhibits TSL-2 locality
\citep{heinz2018computational,chandlee2014strictly}. Our gradation
arrows examine only the immediate left neighbor (ISL); the harmony arrow
scans through transparent vowels (matching TSL-2 locality on the vowel
tier). The subregular classification could inform arrow design for other
languages.

\paragraph{Category theory in NLP.}
\citet{defelice2022categorical} applies monoidal categories to compositional semantics; our contribution is orthogonal: \emph{comonadic} structure for context-dependent local transformation.

\section{Conclusion}
\label{sec:conclusion}

We have shown that the Writer comonad
($\mathsf{DeletionSet} \times \mathsf{Zipper}$) restores strict
coKleisli compositionality for length-changing morphophonological
rules, resolving algebraically a problem that FST cascades address only
through multiplicative state explosion---to our knowledge, the first
application of the comonad/cellular-automaton correspondence
\citep{capobianco2010categorical} to natural language morphophonology.

Several directions for future work emerge: extending the framework to
other agglutinative languages (\S\ref{sec:extensions}); enlarging the
coKleisli arrow vocabulary to cover epenthesis and metathesis for a more
complete algebraic classification; combining comonadic rules with neural
disambiguation to exploit complementary error profiles
\citep{pirinen2019neural}; and using the Cofree comonad for incremental
re-analysis in interactive spell-checking.

\paragraph{Code and data.}
The MCE implementation (Rust, Apache-2.0) and evaluation scripts will
be released at \url{https://github.com/yongsk0066/mce} upon publication,
with a version tag corresponding to the results reported here.
UD Finnish-TDT is distributed under CC-BY-SA~4.0.
The Voikko morphological dictionary (voikko-fi) is distributed under
GPL-2.0-or-later.

\section*{Limitations}
\label{sec:limitations}

The $\mathsf{Zipper}\langle\mathit{char}\rangle$ does not encode
morpheme boundaries; a $\mathsf{Zipper}\langle\mathit{Morpheme}\rangle$
would address this but requires a prior segmentation step.
Consonant deletion requires the Writer comonad
(\S\ref{sec:writer-comonad}), verified by 44 tests.
The 24 CG rule types do not cover the full CG specification's BARRIER
conditions and MAP/SUBSTITUTE operations
\citep{karlsson1990constraint,bick2000parsing}; extending the rule
vocabulary is straightforward within the coKleisli framework.
The rule-only 83.92\% is 12.99\,pp below neural TNPP (2.25\,pp with
the suffix tagger); the gap reflects deep syntactic disambiguation---not
a limitation of the comonadic framework itself.

\bibliography{references}


\appendix

\section{Implementation Details}
\label{app:implementation}

\subsection{Rust and the Absence of Higher-Kinded Types}
\label{app:rust-hkt}

The formalization in Section~\ref{sec:formalization} is presented in
terms of Haskell-style type classes, where a generic \texttt{Comonad}
class abstracts over any functor $W$. Rust lacks higher-kinded types
(HKTs), so a direct translation of \texttt{class Comonad w} is not
possible. We therefore adopt a pragmatic approach: the comonad operations
are implemented directly on the concrete \texttt{Zipper<T>} type,
without a generic \texttt{Comonad} trait.

\begin{lstlisting}[language=Rust,basicstyle=\scriptsize\ttfamily]
pub struct Zipper<T> {
    left:  Vec<T>,  // reversed
    focus: T,
    right: Vec<T>,
}

impl<T: Clone> Zipper<T> {
    pub fn extract(&self) -> &T {
        &self.focus
    }

    pub fn extend<U, F>(&self, f: F)
        -> Zipper<U>
    where F: Fn(&Zipper<T>) -> U {
        let focus = f(self);
        let left  = Lefts::new(self)
            .map(|z| f(&z)).collect();
        let right = Rights::new(self)
            .map(|z| f(&z)).collect();
        Zipper { left, focus, right }
    }
}
\end{lstlisting}

The \texttt{Lefts} and \texttt{Rights} are internal iterators that
produce successive left-shifted and right-shifted zippers by calling
\texttt{move\_left()} and \texttt{move\_right()} respectively. Each
shift clones the internal vectors, resulting in $O(n)$ work per shift
and $O(n^2)$ total allocation for a word of length $n$. For Finnish
words (typically 5--15 characters), this cost is negligible; our
benchmarks show that a 15-word sentence processes in approximately
0.21\,ms. A \texttt{SliceZipper} optimization that holds a reference to
the underlying slice and moves only a position index would reduce
intermediate allocation to $O(n)$.

This approach sacrifices generality---it is not possible to write code
that is polymorphic over arbitrary comonads---but gains clarity and
avoids the complex trait-bound machinery that Rust's Generic Associated
Types (GATs) would require for a full \texttt{Comonad} trait. Since our
framework requires only a single comonad instance (\texttt{Zipper}),
this trade-off is appropriate.

\subsection{CoKleisli Arrows in Rust}
\label{app:cokleisli-rust}

Each morphophonological rule is a plain Rust function with the signature
\texttt{fn(\&Zipper<char>) -> char}. For consonant gradation, the grade
parameter is supplied via partial application using a closure:

\begin{lstlisting}[language=Rust,basicstyle=\scriptsize\ttfamily]
pub fn apply_gradation(
    z: &Zipper<char>, grade: Grade
) -> char {
    let focus = *z.extract();
    let left  = z.peek_left(1).copied();
    let right = z.peek_right(1).copied();
    match find_pattern_at_pos1(
        left, focus, right, grade
    ) {
        Some(pat) => /* target char */,
        None      => focus,
    }
}
// Usage with extend:
let weak = zipper.extend(
    |z| apply_gradation(z, Grade::Weak)
);
\end{lstlisting}

The CG-lite rules use a trait-based dispatch instead:

\begin{lstlisting}[language=Rust,basicstyle=\scriptsize\ttfamily]
pub trait CgRule {
    fn apply(&self,
        z: &Zipper<ReadingSet>
    ) -> ReadingSet;
}
\end{lstlisting}

Each concrete rule type (24 types including \texttt{RemoveIfPreceded},
\texttt{SelectIfFollowed}, \texttt{RemoveByClass},
\texttt{SelectByBaseform}, and others) implements this
trait. The \texttt{apply} method has the coKleisli arrow signature
\texttt{\&Zipper<ReadingSet>}~$\to$~\texttt{ReadingSet}, and can be
passed directly to $\mathsf{extend}$:
\begin{lstlisting}[language=Rust,basicstyle=\scriptsize\ttfamily]
let result = zipper.extend(
    |z| rule.apply(z)
);
\end{lstlisting}

\subsection{The Gradation Pattern Table}
\label{app:pattern-table}

The 11 consonant gradation patterns are encoded in a static table of
\texttt{GradationPattern} structures, each specifying a two-character
window for both the strong and weak grades:

\begin{lstlisting}[language=Rust,basicstyle=\scriptsize\ttfamily]
const PATTERNS: &[GradationPattern] = &[
  // Geminates (highest priority)
  GP { s: ['p','p'], w: ['p','\0'] },
  GP { s: ['t','t'], w: ['t','\0'] },
  GP { s: ['k','k'], w: ['k','\0'] },
  // Clusters
  GP { s: ['m','p'], w: ['m','m'] },
  GP { s: ['n','t'], w: ['n','n'] },
  // ... (6 more patterns)
  // Single consonants (lowest priority)
  GP { s: ['\0','p'], w: ['\0','v'] },
  GP { s: ['\0','t'], w: ['\0','d'] },
  GP { s: ['\0','k'], w: ['\0','\0'] },
];
\end{lstlisting}

The ordering within the table encodes pattern priority: the first
matching pattern wins. The \verb|'\0'| character serves as an implementation
artifact in the internal representation: at position~0 of
single-consonant patterns, it indicates ``preceded by any vowel,''
distinguishing these patterns from cluster patterns that require a
specific left neighbor; as an output character, it signals deletion. When the
pattern table produces a \verb|'\0'| output, the Writer comonad arrow
(\S\ref{sec:writer-comonad}) intercepts it and records the position in
a $\mathit{DeletionSet}$ instead of emitting the null character,
restoring strict coKleisli compositionality. The \verb|'\0'| never
appears in final output---it exists solely as an internal sentinel
consumed by the Writer comonad's $\mathsf{materialize}$ step.

\subsection{Bidirectional Support}
\label{app:bidirectional}

All 11 patterns support both weakening (strong $\to$ weak) and
strengthening (weak $\to$ strong). The $\mathsf{find\_pattern\_at\_pos1}$
function selects which side of the pattern to match against based on the
$\mathit{Grade}$ parameter: for weakening, it matches the
$\mathsf{strong}$ side and produces the $\mathsf{weak}$ side; for
strengthening, vice versa.

\subsection{Test Methodology}
\label{app:tests}

The implementation is verified through 317 comonad-related tests
organized in six files:

\begin{table}[h]
\centering
\small
\begin{tabular}{@{}p{1.6cm}p{1.4cm}cp{2.6cm}@{}}
\toprule
Component & File & \# & Verification \\
\midrule
Zipper & \texttt{zipper} & 27
  & Laws, navigation \\
Morpho & \texttt{finnish} & 89
  & 11 patterns, roundtrip \\
Writer & \texttt{writer} & 44
  & Monoid/comonad laws, pipeline equiv. \\
CG-lite & \texttt{cg} & 138
  & 24 rule types, safety \\
Proptest & \texttt{proptest} & 12
  & Property-based law verification \\
Bench & \texttt{bench} & 7
  & Perf.\ regression \\
\bottomrule
\end{tabular}
\caption{Test suite organization. All 317 tests pass on CI.}
\label{tab:tests}
\end{table}

\paragraph{Comonad law tests.}
The left identity law ($\mathsf{extend}\;\mathsf{extract} = \mathrm{id}$)
is tested for zippers at multiple focus positions. The right identity law
($\mathsf{extract} \circ \mathsf{extend}\;f = f$) is tested at every
position in a 4-element zipper, confirming that the \texttt{Lefts}/\texttt{Rights}
iterators correctly enumerate all positions. The Writer comonad
(\texttt{WriterZipper}) is additionally verified for all three comonad
laws with monoid accumulation, monoid laws for
$\mathsf{DeletionSet}$ (identity, associativity, idempotence), and
equivalence between the Writer pipeline and the sentinel-based pipeline
on all 11 gradation patterns.

\subsection{L3$'$ Proof for the Writer Comonad}
\label{app:l3-proof}

We must show
$\mathsf{extend}_W\;g \circ \mathsf{extend}_W\;f
  = \mathsf{extend}_W\;(g \circ \mathsf{extend}_W\;f)$.
Expanding the left side, the first $\mathsf{extend}_W$ applies $f$ at
every position, accumulating a deletion set $D_f = \bigcup_i \pi_1(f(w, z_i))$;
the second applies $g$ to the result, accumulating
$D_g = \bigcup_j \pi_1(g(w \cup D_f, z'_j))$.
The total accumulated set is $(w \cup D_f) \cup D_g$.
On the right side, $\mathsf{extend}_W$ applies the composed arrow
$g \circ \mathsf{extend}_W\;f$ at each position, feeding $f$'s output
(including its accumulated deletions) directly into $g$, yielding
$w \cup (D_f \cup D_g)$.
Because $(\mathit{DS}, \cup, \emptyset)$ is an associative monoid,
$(w \cup D_f) \cup D_g = w \cup (D_f \cup D_g)$,
so the deletion components coincide.
The character components coincide by L3$'$ of the underlying Zipper
comonad.

\paragraph{Roundtrip tests.}
For non-destructive gradation patterns (clusters and single consonants),
the roundtrip property $\mathsf{strong}(\mathsf{weak}(\mathit{word}))
= \mathit{word}$ is verified. For example, weakening \emph{kampa} to
\emph{kamma} and then strengthening back produces \emph{kampa}. The
geminate patterns do not satisfy roundtrip because weakening
\emph{kaappi} to \emph{kaapi} (via $\mathit{DeletionSet}$ materialization)
loses the information that the original contained a geminate;
strengthening \emph{kaapi} cannot distinguish between a weakened
geminate and a lexical singleton. This asymmetry is a fundamental
property of the deletion operation, not a limitation of the comonadic
framework.

\section{Extended Evaluation Details}
\label{app:evaluation}

\subsection{Vowel Harmony Verification}
\label{app:eval-harmony}

Vowel harmony resolution is tested for back-vowel stems
(\emph{talo} + \emph{-ssA} $\to$ \emph{talossa}), front-vowel stems
(\emph{p\"oyd\"a} + \emph{-ssA} $\to$ \emph{p\"oyd\"ass\"a}), and
stems with only neutral vowels (\emph{tie} + \emph{-ssA} $\to$
\emph{tiess\"a}, defaulting to front). The $\mathsf{detect\_harmony}$
function correctly scans leftward through neutral vowels to find the
nearest non-neutral vowel. All three archiphonemes (A, O, U) are tested
in both back and front contexts. The harmony function is idempotent:
applying it to already-resolved text produces no change.

\subsection{Pipeline Integration Examples}
\label{app:eval-pipeline}

The morphophonological pipeline
$\mathsf{gradation} \mathbin{>\!\!=\!\!>} \mathsf{harmony}
\mathbin{>\!\!=\!\!>} \mathsf{possessive}$
is tested on words requiring multiple simultaneous transformations:

\begin{itemize}
  \item \emph{rantAssA} (weak) $\to$ \emph{rannassa}
    (gradation: nt$\to$nn, harmony: A$\to$a)
  \item \emph{pukussA} (weak) $\to$ \emph{puussa}
    (gradation: k deleted, harmony: A$\to$a)
  \item \emph{kampAstAVn} (weak) $\to$ \emph{kammastaan}
    (all three rules active)
  \item \emph{kenk\"asstAVn} (weak, front) $\to$ \emph{keng\"ast\"a\"an}
    (gradation: nk$\to$ng, harmony: A$\to$\"a, possessive: V$\to$\"a)
\end{itemize}

\subsection{CG-lite Disambiguation Examples}
\label{app:eval-cg-examples}

The CG-lite module is evaluated on both constructed and corpus-derived
Finnish examples. In the rules below, Finnish CG tag names are used:
\emph{lukusana} = numeral, \emph{nimisana} = noun,
\emph{teonsana} = verb, \emph{laatusana} = adjective,
\emph{seikkasana} = adverb.

\paragraph{``kuusi koiraa'' (`six dogs').}
The word \emph{kuusi} has readings \{numeral/kuusi, noun/kuusi\}. The
rule \texttt{SELECT lukusana IF (+1 nimisana)} correctly selects the
numeral reading when followed by the noun \emph{koiraa}.

\paragraph{``kuusi kasvaa'' (`spruce grows').}
With the rule \texttt{SELECT nimisana IF (+1 teonsana)}, the noun reading
of \emph{kuusi} is correctly selected when followed by the verb
\emph{kasvaa}.

\paragraph{``ei voi'' (`cannot').}
The word \emph{voi} is ambiguous between `butter' (noun) and `can'
(verb). The rule \texttt{SELECT teonsana IF (-1 BASEFORM=ei)} correctly
selects the verb reading when preceded by the negation verb \emph{ei}.

\paragraph{3-rule cascade.}
A sentence position with 4-way ambiguity \{noun, verb, adjective,
adverb\} is reduced to a single reading through three sequential
$\mathsf{extend}$ passes:
(1)~\texttt{REMOVE laatusana IF (NOT -1 lukusana)},
(2)~\texttt{REMOVE seikkasana IF (-1 nimisana)},
(3)~\texttt{SELECT teonsana IF (+1 teonsana)}.
This demonstrates the progressive disambiguation enabled by coKleisli
composition, where each rule's pruning enables subsequent rules to make
sharper decisions.

\subsection{Qualitative Coverage Details}
\label{app:eval-qualitative}

\begin{table}[!t]
\centering
\small
\begin{tabular}{@{}llllc@{}}
\toprule
Pattern & In & Out & RT \\
\midrule
pp $\to$ p  & kaappi & kaapi  & --  \\
tt $\to$ t  & matto  & mato   & --  \\
kk $\to$ k  & kukka  & kuka   & --  \\
p $\to$ v   & tupa   & tuva   & \checkmark \\
t $\to$ d   & katu   & kadu   & \checkmark \\
k $\to$ $\emptyset$ & puku & puu & --  \\
mp $\to$ mm & kampa  & kamma  & \checkmark \\
lt $\to$ ll & kulta  & kulla  & \checkmark \\
nt $\to$ nn & ranta  & ranna  & \checkmark \\
rt $\to$ rr & parta  & parra  & \checkmark \\
nk $\to$ ng & kenk\"a & keng\"a & \checkmark \\
\bottomrule
\end{tabular}
\caption{Gradation patterns as coKleisli arrows (weakening
  direction). RT = roundtrip
  $\mathsf{strong}(\mathsf{weak}(w)) \!=\! w$.}
\label{tab:gradation-results}
\end{table}

The non-roundtrip cases (geminate weakening and \emph{k} deletion)
involve information loss: the deletion of a character removes the
evidence needed for reverse reconstruction. Of the 11 patterns, 7
exhibit full bidirectional roundtrip correctness; the 4 non-roundtrip
cases all involve deletion.

\end{document}